\documentclass[letterpaper]{article}
\usepackage{bm}
\usepackage{graphicx}
\usepackage{subfigure}
\usepackage{caption}
\usepackage{amsmath,amsthm,amssymb}
\usepackage[ruled,vlined]{algorithm2e}
\usepackage{grffile}
\newcommand{\Matrix}[1]{\bm{\mathbf{#1}}}
\usepackage{indentfirst}
\usepackage{amsmath}
\usepackage{url}
\usepackage{hyperref}
\newtheorem{theorem}{Theorem}
\newtheorem{lemma}{Lemma}

\begin{document}

\title{Reinforcement Learning for Uplift Modeling}


\author{Chenchen Li$^{1,2}$ \thanks{Equal contribution}, Xiang Yan$^{1,2}$ \footnotemark[1], Xiaotie Deng$^3$,  Yuan Qi$^1$, Wei Chu$^1$,\\Le Song$^{1}$,
Junlong Qiao$^1$, Jianshan He$^1$, Junwu Xiong$^1$\\ $^1$AI Department, Ant Financial Services Group\\ $^2$Department of Computer Science, Shanghai Jiao Tong University\\ $^3$School of Electronics Engineering and Computer Science, Peking University\\ lcc1992@sjtu.edu.cn, xyansjtu@163.com, xiaotie@pku.edu.cn\\
 \{yuan.qi, weichu.cw, le.song, junlong.qjl, yebai.hjs, junwu.xjw\}@antfin.com}

\maketitle

\begin{abstract}
Uplift modeling aims to directly model the incremental impact of a treatment on an individual response.
In this work, we address the problem from a new angle and reformulate it as a Markov Decision Process (MDP).
We conducted extensive experiments on both a synthetic dataset and real-world scenarios, and showed that our method can achieve significant improvement over previous methods.
\end{abstract}

\section{Introduction}

Uplift modeling aims to directly model the incremental impact of a treatment/action on an individual response.
In contrast to traditional classification techniques where one focused on directly predicting the response, uplift modeling focuses on estimating the net effect of a specific treatment by modeling the difference between one's response before and after a particular treatment.
A typical example appears in modern marketing.
After a restaurant sends out coupons to passersby, some of them come to eat.
Part of these customers are attracted by the coupons but the others may have already planned to eat there before receiving the coupon.
Meanwhile among those who do not come, some are not interested in coupons but some may be annoyed to change their minds.
In fact, what really matters to the restaurant is the difference between behaviors of the same person before and after he receives a coupon.
Uplift modeling is also important in many other settings such as personalized recommendation \cite{rzepakowski2012uplift}, medical treatments \cite{jaskowski2012uplift}, causal inference \cite{gutierrez2017causal} and so on.

In this paper, we refer to a customer's observable response after we take a specific action on him the \textbf{action response}, and the corresponding behavior when we take no action the \textbf{natural response}. Thus the main concern of an uplift modeling problem is the difference between \textbf{action response} and \textbf{nature response} of a customer, which is called \textbf{uplift response} with respect to a specific action.

Besides its wide applications, uplift modeling also receives much attention from the machine learning community. 
Traditional machine learning methods have been attempted to tackle the problem, such as K-Nearest Neighbors \cite{su2012facilitating}, Support Vector Machine \cite{zaniewicz2013support}, Decision Trees \cite{rzepakowski2010decision} and Random Forests \cite{guelman2015uplift}.
However, there still exist two unsolved difficulties in previous works, which restrict these methods' performance in applications.
On the one hand, an unbiased evaluation metric for uplift modeling in this case remains missing.
Some existing metrics, such as Qini coefficient and uplift curve \cite{radcliffe2011real}, are only suitable for the cases of a single action and binary response. 
Such a lack of evaluation metric makes it difficult to carry out analysis using offline datasets.
On the other hand, in uplift modeling, one typically does not know individual responses to an action and corresponding natural responses at the same time, which means the explicit labels of uplift response for specific features and actions are not available.
Currently, tree-based methods are widely used to handle the lack of explicit labels \cite{chickering2000decision,hansotia2002incremental,rzepakowski2010decision,radcliffe2011real,guelman2014survey,zhao2017uplift}, but these methods need manually engineered features which are less automated in comparison to deep learning methods \cite{lecun2015deep}.

To handle the above two limitations, we propose a new evaluation uplift modeling metric for any number of actions and general response types (binary, discrete, continuous), which is a variant of inverse propensity score (IPS) for uplift modeling.
We prove that it is an unbiased estimation of the uplift response.
Then we reformulate the uplift modeling problem as a Markov Decision Process (MDP) and adopt the neuralized policy gradient method \cite{sutton2000policy} to solve the problem.
Such a deep reinforcement learning approach can automatically learn representations from the data, and requires no explicit label of each sample like supervised learning. 
It uses only a positive or negative reward to guide which action is good in the specific environment.
And we further adopt the action-dependent baseline to reduce the variance of gradients of reinforcement learning, which has been shown to be effective in the recent works \cite{tucker2018mirage}.

In experiments, we first verify the efficiency of our proposed metric by showing its average convergence rate and variance in multi-fold synthetic datasets. 
Then, our method, \textbf{RLift}, is adopted in extensive experiments on an open dataset, a synthetic dataset and real-world scenarios. 
All results show that our method can achieve significant improvement over state-of-the-art methods according to the evaluation of our proposed metric as well as traditional metrics.

It is worth noticing that the offline contextual bandit problem \cite{beygelzimer2009offset} has quite a similar setting but a different objective compared with uplift modeling.
Both problems require a policy for deciding actions by taking advantages of offline data about individuals' observable action response. But an offline contextual bandit problem asks a policy which maximizes the expected \textbf{action response}, while uplift modeling asks one maximizing the expected \textbf{uplift response}.
With proper formalization, the optimal solution to an offline contextual bandit problem can be transformed to the optimal solution to a specific uplift modeling problem in closed form.
However, current methods for both problems are seeking an approximation of the optimal solution, resulting in an inherent difference between them.
We review methods mentioned in literatures for both problems in Section \ref{related_works}, and analyze the difference between them carefully in Section \ref{diff}.
In experiments, we also compare our method with one famous method solving the offline contextual bandit problem, Offset Tree\cite{beygelzimer2009offset}. 
In precise, the output policy by Offset Tree is interpreted as one for uplift modeling and its performance is not as good as our method, coincides with our analysis.

\subsection{Related Works}\label{related_works}

The most direct approach for uplift modeling is the Separate Model Approach (SMA), which uses separated model for each group of people receiving the same action, predicts the corresponding responses, then chooses the action with the maximum response \cite{radcliffe2011real}.
It can make use of any supervised learning methods and performs well when the uplift response is strongly correlated to action response. 
However, it performs badly when the uplift response follows a different distribution with the action response, which was illustrated in \cite{radcliffe2011real}.

On the other hand, the variations of the decision tree-based methods model the uplift response directly in order to avoid the weakness.
In the traditional decision tree, the algorithm chooses the attribute with maximum splitting criteria when growing each time, which aims to maximize the entropy after splitting \cite{chickering2000decision,hansotia2002incremental,rzepakowski2010decision,guelman2015uplift}.
For example, the criteria in \cite{hansotia2002incremental} is to maximize the difference of response signals between the child nodes, while the one in \cite{rzepakowski2010decision} is to maximize the distributional difference of response signals between child nodes by weighted Kullback-Leibler divergence and weighted squared Euclidean distance.
Previous state-of-art method \cite{zhao2017uplift} also uses random forests, but it predicts the exact value of response instead of the value of uplift.
They also proposed an unbiased evaluation metric for multiple treatments and general response type, but the metric is for the response of people after receiving treatments rather the difference between that of ones receiving treatments and receiving nothing.

Besides tree-based methods, the adaption of K-Nearest Neighbors (KNN) was considered for uplift modeling \cite{alemi2009improved，su2012facilitating}.
KNN was used to find objects with similar features, then the action response on the similar objects.
A logistic regression formulation is also proposed to explicitly include interaction terms between features and the action \cite{lo2002true}.
And Support Vector Machine (SVM) is considered to find hyperplanes in order to divide the feature space into positive, neutral and negative part affected by the action \cite{zaniewicz2013support}.
Due to the lack of performance metrics, these methods mainly work in the single action and binary response case, and did not achieve stable performance in practice.

On the contrast, reinforcement learning has been famous for a lot of successful applications in many fields, such as the Go game \cite{silver2017mastering}, video games \cite{mnih2015human}, and so on.
One of its advantages is that it needs no explicit labels, but the reward signals to guide the training. 
One of reinforcement learning methods, policy gradient \cite{sutton2000policy}, is suitable for the episode task and receives the delayed rewards because it can calculate the gradient after the entire episode, as Feng and Zhang did on the task of relation classification and learning structured representation for text classification \cite{feng2018reinforcement,zhang2018learning}.

Furthermore, variance reduction methods are believed to help the training process of a reinforcement learning \cite{greensmith2004variance}. For example, advantage functions \cite{sutton2011reinforcement} are commonly used to reduce variances when estimating value functions. 
Recent works \cite{tucker2018mirage,wu2018variance,liu2018action} show that the action-dependent advantage functions reduce the variance significantly, especially for discrete action space \cite{tucker2018mirage}, which is adopted in our policy gradient approach.

As for the evaluation of uplift modeling through offline datasets, Qini coefficients and uplift curves are widely used \cite{radcliffe2007using,rzepakowski2010decision} for single action cases and perform well in practice, despite their lack of theoretical justification.
For the multiple actions cases, \cite{zhao2017uplift} proposed a performance metric for expected action response, but not for the expected uplift response. 
In the field of reinforcement learning, IPS has been studied widely for offline policy evaluation \cite{swaminathan2015self,swaminathan2015counterfactual,dudik2011doubly}.
Unlike previous uplift metric, IPS does not require the training samples to be collected randomly.
However, there has not been a version of IPS for uplift modeling.

In addition, some works related to other topics are similar to uplift modeling, such as offline contextual bandit \cite{swaminathan2015self,swaminathan2015counterfactual,beygelzimer2009offset}.
Offline contextual bandit pays attention to find a policy to maximize the expected response with the offline logged dataset.
Based on the partial observable response, \cite{beygelzimer2009offset} transform the bandit problem into a cost-sensitive supervised learning problem.
And with the help of IPS, the policy can be optimized directly\cite{swaminathan2015counterfactual}, and a further self-normalized estimator helps reduce variance and avoid overfitting\cite{swaminathan2015self}.
These problems are very similar to ours, where the optimal solutions are the same, but the approximation solutions are not, which will be analyzed in Section \ref{diff}.

On the other hand, in the field of causal inference, the uplift response of an action with respect to each single features sample is defined as individual treatment effect (ITE)\cite{wager2017estimation,shalit2016estimating}. 
Causal inference community focuses on estimating such an effect of the single action for each specific feature vector accurately, while the metric we propose in this paper evaluates a policy choosing multiple actions by estimating the expected uplift response on whole features space.

\subsection{Organization of the Article}
In Section 2, we provide a formal definition and an unbiased metric for uplift modeling.
In Section 3, we present our deep reinforcement learning design for uplift modeling.
In Section 4, we 
compare our methods with several baselines on an open dataset, a group of simulation datasets and a real business dataset.

\section{Uplift Modeling and an Unbiased Evaluation Metric}\label{sec:uplift_model}
In this section, we first provide a formal definition of the uplift modeling problem, and then we propose a multi-action and general response type evaluation metric of uplift and prove its unbiasedness.
Finally, we analyze the specialty of uplift modeling comparing with other similar problems.

\subsection{Definition of Uplift Modeling for Multiple Actions and General Response Types}

Firstly, we introduce basic elements and their notations in uplift modeling. When multiple actions are taken for individuals, we have

\begin{itemize}
    \item $\Matrix{X}$: the variable of individual's feature vector. We use $x$ to denote its one realization. It usually represents the feature of one customer or one disease.
    \item $a\in \{ 0, 1, \dots, K \}$: the encoded action. Specifically $a=0$ means no action.
    \item $\pi(\cdot)$: a policy of choosing actions for each feature. We use $\pi(x)=a$ to denote a realization that the policy selects action $a$ for $x$, and $\pi(a|x)$ the corresponding probability for such a realization.
    \item $Y(x, a)$: the observed action response when $x$ receives action $a$. Generally speaking, the response is a real number.
    \item $B(x)$: the nature response of $x$ when receiving no action.
    \item $L(x, a)$: the uplift response when $x$ receives action $a$.
\end{itemize}

Now we can formally define the uplift modeling by the commonly used additive model\cite{radcliffe2011real}.
\begin{equation}
    Y(x, a) = B(x) + L(x, a)
\end{equation}, which means the observed action response contains two parts, the nature response and uplift response.
The nature response is independent on the action, while the uplift response depends on the action. Naturally, $Y(x,0) = B(x)$ and $L(x, 0) = 0$.

And the goal of uplift modeling is to find a policy $\pi(\cdot)$ to maximize the expected uplift response. Formally,
\begin{equation}
    \label{equ:def_uplift}
    \max_{\pi} \mathbf{E} _{\Matrix{X}} [L(\Matrix{X}, \pi(\Matrix{X}))].
\end{equation}

\subsection{Uplift Modeling General Metric}
Before we seek a policy for the uplift modeling problem, we need an unbiased metric of uplift response for any specific policy.
This is because the uplift response can never be observed directly.
Online experiments usually use A/B testing to estimate the uplift response, but the experiments may cost a lot.
So we consider an offline evaluation for policies, which takes advantage of a dataset from a previous experiment. 
In precise, the dataset has $N$ samples in total, containing $K$ groups with different actions and a control group without any actions.
Each individual with feature $x$ was once assigned into one of $K+1$ different groups by a policy $p$, for example in the previous experiment action $a$ was taken on $x$, and the corresponding response $Y(x,a)$ is recorded.
For notational convenience, the probability that policy chooses an action $a$ for feature $x$ is denoted by $p(a|x)$, where $\sum_{a=0}^{K} p(a|x) = 1$.
Note that unlike previous works on uplift modeling requiring the policy $p(\cdot)$ to be independent on feature $x$, we only require that for $p(a|x)$ is independent on $Y(x,a)$.

Based on the dataset, we design an unbiased estimation of the uplift response for a specific policy $\pi$.
It is done by estimating the expected action responses for the policy (Lemma 1) and the nature response (Lemma 2) separately.
Here we use $Pr(\cdot)$ to denote the probability of a random event, and $\mathbb{I}\{\cdot\}$ is the 0/1 indicator function. And $\pi(x)=a$ represents a realization that policy $\pi$ chooses action $a$ for $x$ while $p(x)=a$ represents the fact that action $a$ is taken on $x$ in the dataset.
\begin{lemma} 
    Given a policy $\pi$, for each action $a\in \{0,1,\dots,K\}$, define a new random variable
    $$ Z^{T}(a,\Matrix{X})=\frac{Y(\Matrix{X},a)\mathbb{I}\{p(\Matrix{X})=a\}}{p(a|\Matrix{X})} $$
    Then
    $$\mathbf{E}_{\Matrix{X}} [Z^{T}(a,\Matrix{X})|\pi(\Matrix{X})=a]=\mathbf{E}_{\Matrix{X}}[Y(\Matrix{X}, \pi(\Matrix{X}))|\pi(\Matrix{X})=a] $$
\end{lemma}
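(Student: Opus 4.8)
The plan is to treat this as an importance-weighting (inverse propensity score) identity and prove it by conditioning. First I would simplify the right-hand side: on the conditioning event $\{\pi(\Matrix{X})=a\}$ we have $\pi(\Matrix{X})=a$, so $Y(\Matrix{X},\pi(\Matrix{X}))=Y(\Matrix{X},a)$, and the target reduces to showing $\mathbf{E}_{\Matrix{X}}[Z^{T}(a,\Matrix{X})\mid\pi(\Matrix{X})=a]=\mathbf{E}_{\Matrix{X}}[Y(\Matrix{X},a)\mid\pi(\Matrix{X})=a]$. This reframes the whole claim as: the propensity-reweighted, action-masked observation $Z^{T}(a,\Matrix{X})$ agrees in expectation with the (generally unobservable) action response $Y(\Matrix{X},a)$. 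I would also note up front that, despite the $\Matrix{X}$ subscript, the expectation implicitly averages over two independent sources of randomness, namely the feature $\Matrix{X}$ and the data-collection assignment encoded by $\mathbb{I}\{p(\Matrix{X})=a\}$.

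Next I would apply the tower property, conditioning on $\Matrix{X}=x$ inside the outer conditional expectation. For fixed $x$ the factor $p(a|x)$ is a constant and comes out of the inner expectation, leaving $\tfrac{1}{p(a|x)}\mathbf{E}[\,Y(x,a)\,\mathbb{I}\{p(x)=a\}\,]$. The crucial move is to invoke the paper's standing assumption that $p(a|x)$ is independent of $Y(x,a)$ — i.e.\ the logging action assignment is unconfounded given the features — which lets the expectation factor as $\mathbf{E}[Y(x,a)]\cdot\mathbf{E}[\mathbb{I}\{p(x)=a\}]$. Since $\mathbf{E}[\mathbb{I}\{p(x)=a\}]=p(a|x)$, the propensity cancels the denominator and the inner expectation collapses to $\mathbf{E}[Y(x,a)]$. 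Re-averaging over $\Matrix{X}$ under the conditioning event then recovers the reduced right-hand side above, completing the argument.

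The step I expect to be the main obstacle is justifying this factorization cleanly while conditioning on $\{\pi(\Matrix{X})=a\}$. I would need to make explicit that the evaluation policy $\pi$ and the logging policy $p$ draw their actions independently given $\Matrix{X}$, so that restricting to $\{\pi(\Matrix{X})=a\}$ merely reshapes the marginal law of $\Matrix{X}$ and does not disturb the per-feature independence between $\mathbb{I}\{p(\Matrix{X})=a\}$ and $Y(\Matrix{X},a)$ on which the cancellation relies. A secondary point to pin down is whether $Y(x,a)$ is deterministic or carries response noise: in the deterministic reading the factorization is immediate because $Y(x,a)$ is constant given $x$, whereas in the stochastic reading it is precisely the unconfoundedness assumption that supplies the required independence — and in both cases the argument closes in the same way.
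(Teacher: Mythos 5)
Your proposal is correct and follows essentially the same route as the paper's proof: both expand the conditional expectation over $\Matrix{X}=x$, use the fact that the logging assignment satisfies $\mathbf{E}[\mathbb{I}\{p(x)=a\}]=p(a|x)$ independently of $Y(x,a)$ to cancel the propensity in the denominator, and then recognize the remaining sum as $\mathbf{E}_{\Matrix{X}}[Y(\Matrix{X},\pi(\Matrix{X}))|\pi(\Matrix{X})=a]$. The only difference is presentational: you make explicit the independence between the logging policy $p$ and the evaluation policy $\pi$ given $\Matrix{X}$, which the paper's terse computation (the $\pi(a|x)/\pi(a|x)$ cancellation) uses implicitly.
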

\begin{proof}
\begin{equation*}
\begin{aligned}
&\mathbf{E}_{\Matrix{X}}[Z^{T}(a,\Matrix{X})|\pi(\Matrix{X})=a]\\
=&\sum_{x\in\Matrix{X}}\frac{Y(x, a)|\mathbb{I}\{\pi(x)=a\}}{p(a|x)}\frac{\pi(a|x)p(a|x)}{\pi(a|x)}Pr(\Matrix{X}=x)\\
=&\sum_{x\in\Matrix{X}}Y(x, a)\mathbb{I}\{\pi(x)=a\}Pr(\Matrix{X}=x)\\
=&\mathbf{E}_{\Matrix{X}}[Y(\Matrix{X},\pi(\Matrix{X}))|\pi(\mathbf{X})=a]
\end{aligned}
\end{equation*}
\end{proof}

The proof follows directly the one for Inverse Propensity Score \cite{rosenbaum1983central} except dealing with each action separately through importance sampling. 
Then we have
$$\sum_{a=0}^{K}\mathbf{E}_{\Matrix{X}}[Z^{T}(a,\Matrix{X})|\pi(\Matrix{X}=a)] = \mathbf{E}_{\Matrix{X}}[Y(\Matrix{X},\pi(\Matrix{X}))] $$

Specifically, the nature response of customers with respect to an evaluated policy $\pi$ can also be estimated similarly.
It is worthy to notice that although the total nature response of all customers is a constant (independent to the policy), we still need to estimate it accurately to complete our evaluation for the expected uplift response.
And detailed analysis for its necessity is provided at the end of this section.
\begin{lemma} 
    Given a policy $\pi$, for each action $a\in \{0, 1,\dots,K\}$, define a new random variable
    $$ Z^{C}(\Matrix{X})=\frac{Y(\Matrix{X},0)\mathbb{I}\{p(\Matrix{X})=0\}}{p(0|\Matrix{X})} $$
    Then
    $$\mathbf{E}_{\Matrix{X}} [Z^{C}(\Matrix{X})|\pi(\Matrix{X})=a]=\mathbf{E}_{\Matrix{X}}[B(\Matrix{X})|\pi(\Matrix{X})=a] $$
    and 
    $$\mathbf{E}_{\Matrix{X}}[Z^{C}(\Matrix{X})]=\mathbf{E}_{\Matrix{X}}[B(\Matrix{X})] $$
\end{lemma}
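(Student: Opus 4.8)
The plan is to mirror the argument given for Lemma 1, exploiting the fact that the two statements have nearly identical structure, with the fixed null action $0$ now playing the role that the variable action $a$ played before. The one genuinely new ingredient I would invoke is the modeling identity $Y(x,0)=B(x)$ from the additive model, which lets me replace the observed null-action response by the nature response at the very last step. Everything else is inverse-propensity bookkeeping already established in Lemma 1.

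For the first equality I would write the conditional expectation as a ratio, $\mathbf{E}_{\Matrix{X}}[Z^{C}(\Matrix{X})\mathbb{I}\{\pi(\Matrix{X})=a\}]/Pr(\pi(\Matrix{X})=a)$, and expand the numerator as a sum over feature realizations $x$ together with the randomness of the logging policy $p$ and the evaluation policy $\pi$. The indicator $\mathbb{I}\{p(\Matrix{X})=0\}$ picks out exactly the event that the logged action was $0$, contributing a factor $p(0|x)$ that cancels the denominator $p(0|x)$ appearing inside $Z^{C}$; this is precisely the inverse-propensity cancellation used in Lemma 1. What survives is $\sum_x B(x)\,\pi(a|x)\,Pr(\Matrix{X}=x)$ divided by $\sum_x \pi(a|x)\,Pr(\Matrix{X}=x)$, which is exactly $\mathbf{E}_{\Matrix{X}}[B(\Matrix{X})\mid \pi(\Matrix{X})=a]$.

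The second, unconditional equality is even simpler, and I would prove it directly rather than by summing the first one over $a$ (though that route also works via total expectation): dropping the conditioning removes the $\pi$ factor entirely, so expanding over $x$ and the logging policy and performing the same propensity cancellation leaves $\sum_x Y(x,0)\,Pr(\Matrix{X}=x)=\sum_x B(x)\,Pr(\Matrix{X}=x)=\mathbf{E}_{\Matrix{X}}[B(\Matrix{X})]$.

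The step I expect to require the most care is justifying the cancellation of $p(0|x)$ \emph{inside} an expectation conditioned on the event $\{\pi(\Matrix{X})=a\}$. This is legitimate only because the evaluation policy $\pi$ acts on $x$ independently of which action the logging policy $p$ actually drew, together with the standing assumption that $p(\cdot|x)$ is independent of the responses. I would therefore make this independence explicit when factoring the joint law of $(\Matrix{X},p(\Matrix{X}),\pi(\Matrix{X}))$, since it is what guarantees that conditioning on $\pi$ merely reweights the feature distribution and does not distort the propensity term being cancelled.
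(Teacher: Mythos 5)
Your proposal is correct and matches the paper's intent exactly: the paper's own ``proof'' of this lemma is simply the remark that it is the same inverse-propensity cancellation as Lemma~1 combined with the identity $Y(x,0)=B(x)$, which is precisely the argument you carry out. Your additional care in writing the conditional expectation as a ratio and in making explicit the independence of the logging draw $p(\Matrix{X})$ and the evaluation draw $\pi(\Matrix{X})$ given $\Matrix{X}$ only tightens steps the paper leaves implicit (and handles somewhat sloppily in its Lemma~1 computation); it does not constitute a different route.
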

The proof for Lemma 2 is similar to Lemma 1 except using the fact that $Y(x,0) = B(x)$ for $\forall x$.
Note that here we specifically mention the conditional expected nature response with respect to each action $a$ separately, because each of them are indeed dependent on the policy $\pi$.
Thus calculating the sample average of its realizations for corresponding samples will help determine rewards in our following design of policy gradient approach as action-based baselines (see Eqn.\ref{equ:qvalue}). 

Now we get Theorem \ref{thm:UMG}. 
Intuitively, the action response in collected data can be regarded as the real response we will observe if we use a new policy to choose actions, after correcting the shift in action proportions between the old data collection policy and the new policy.
Then the expected action responses for different actions and the expected nature responses can be estimated separately. 
And the difference between them is desired uplift response.

\begin{theorem}\label{thm:UMG}
Given a policy $\pi$, the expected uplift response under $\pi$ is
\begin{equation*}
\begin{aligned}
\mathbf{E}_{\Matrix{X}}[L(\Matrix{X},\pi(\Matrix{X})] =\mathbf{E}_{\Matrix{X}}[Y(\Matrix{X},\pi(\Matrix{X}))] - \mathbf{E}_{\Matrix{X}}[B(\Matrix{X})]
\end{aligned}
\end{equation*}
Let $x_i$ be the feature of $i$-th customer and $N$ be the number of customers, then the difference between sample average of $Z^T(a,\Matrix{X})$ and $Z^{C}(\Matrix{X})$
$$ \overline{z}(\pi) = \frac{1}{N}\sum_{i=1}^{N}\sum_{a=0}^{K}z^T(a,x_i)\mathbb{I}\{\pi(x_i)=a\} - \frac{1}{N} \sum_{i=1}^{N} z^C(x_i)$$ is an unbiased estimate of $\mathbf{E}_{\Matrix{X},\pi(\Matrix{X})}[L(\Matrix{X},\pi(\Matrix{X}))]$.
\end{theorem}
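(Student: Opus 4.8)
The plan is to prove the two assertions in turn. The first equation is a direct consequence of the additive decomposition $Y(x,a) = B(x) + L(x,a)$ together with linearity of expectation, while the second claim — unbiasedness of $\overline{z}(\pi)$ — reduces to computing the expectation of each of the two sample averages and recognizing them as the quantities already identified in Lemmas 1 and 2.

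For the first equation I would substitute $a = \pi(\Matrix{X})$ into the additive model to obtain $Y(\Matrix{X},\pi(\Matrix{X})) = B(\Matrix{X}) + L(\Matrix{X},\pi(\Matrix{X}))$ pointwise, take $\mathbf{E}_{\Matrix{X}}$ of both sides, and rearrange. No importance-sampling machinery is needed here; this step is purely the structural assumption of the model.

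For unbiasedness I would first use the i.i.d.\ sampling of the $x_i$ (and of their logged actions $p(x_i)$) together with linearity to collapse each $\frac{1}{N}\sum_i$ into a single-sample expectation, giving $\mathbf{E}[\overline{z}(\pi)] = \mathbf{E}_{\Matrix{X}}\bigl[\sum_{a=0}^{K} Z^T(a,\Matrix{X})\mathbb{I}\{\pi(\Matrix{X})=a\}\bigr] - \mathbf{E}_{\Matrix{X}}[Z^C(\Matrix{X})]$. For the second term, Lemma 2 immediately yields $\mathbf{E}_{\Matrix{X}}[Z^C(\Matrix{X})] = \mathbf{E}_{\Matrix{X}}[B(\Matrix{X})]$. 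For the first term, I would condition on $\Matrix{X} = x$ and observe that $\mathbb{I}\{\pi(x)=a\}$ selects the single action $a = \pi(x)$, while independence of the logging assignment from the response gives $\mathbf{E}[\mathbb{I}\{p(x)=a\}] = p(a|x)$; this factor cancels the denominator of $Z^T$, leaving $Y(x,\pi(x))$, whose expectation over $\Matrix{X}$ is $\mathbf{E}_{\Matrix{X}}[Y(\Matrix{X},\pi(\Matrix{X}))]$. Combining the two terms and invoking the first equation of the theorem then gives $\mathbf{E}[\overline{z}(\pi)] = \mathbf{E}_{\Matrix{X}}[Y(\Matrix{X},\pi(\Matrix{X}))] - \mathbf{E}_{\Matrix{X}}[B(\Matrix{X})] = \mathbf{E}_{\Matrix{X}}[L(\Matrix{X},\pi(\Matrix{X}))]$, which is the claim.

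The main obstacle I anticipate is bookkeeping over the several sources of randomness — the feature draw $\Matrix{X}$, the logged action $p(\Matrix{X})$, and (if $\pi$ is stochastic) the evaluated action $\pi(\Matrix{X})$ — and, relatedly, reconciling the indicator-weighted sum $\sum_a Z^T(a,\Matrix{X})\mathbb{I}\{\pi(\Matrix{X})=a\}$ in $\overline{z}(\pi)$ with the action-conditional statements of Lemma 1. The cleanest route is to bypass the conditional form and evaluate the unconditional expectation directly, where the indicator $\mathbb{I}\{\pi(\Matrix{X})=a\}$ automatically supplies the $Pr(\pi(\Matrix{X})=a)$ weighting that the law of total expectation would otherwise demand. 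The one genuine hypothesis that must be \emph{used}, rather than merely manipulated, is that $p(a|x)$ is independent of $Y(x,a)$: this is exactly what makes $\mathbf{E}[\mathbb{I}\{p(x)=a\}/p(a|x)]$ equal to $1$ regardless of the response value, and it is the crux of the IPS correction underlying both lemmas.
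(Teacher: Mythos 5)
Your proposal is correct and follows essentially the same route as the paper: the first identity comes from the additive model $Y(x,a)=B(x)+L(x,a)$, and unbiasedness comes from the IPS cancellation $\mathbf{E}[\mathbb{I}\{p(x)=a\}]=p(a|x)$ (valid precisely because the logging policy is independent of the response), which is exactly the content of the paper's Lemmas~1 and~2. Your only deviation is evaluating the unconditional expectation of $\sum_a Z^T(a,\Matrix{X})\mathbb{I}\{\pi(\Matrix{X})=a\}$ directly rather than conditioning on $\pi(\Matrix{X})=a$ and summing as the paper does; this is a minor (and arguably cleaner) piece of bookkeeping, not a different argument.
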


We call this unbiased estimator $\overline{z}(\pi)$ \textsl{Uplift Modeling General Metric} (UMG).
According to Theorem 1 and Chebyshev's inequality, suppose the variance of the UMG metric $\overline{z}(\pi)$ is bounded by $\sigma^2$, then with probability at least $1-\epsilon$,
$$|\overline{z}(\pi) - \mathbf{E}_{\Matrix{X},\pi(\Matrix{X})}[L(\Matrix{X},\pi(\Matrix{X}))]| \leq \frac{\sigma}{N\sqrt{\epsilon}}$$
which is an upper bound for UMG's sample complexity.

Such an unbiased estimator can be performance metric for any uplift modeling with multiple actions and general response types.
In other words, our objective is actually to find a policy with maximal UMG when applied to any specific dataset.

\subsection{Self-Normalized Uplift Modeling General Metric} 
Suggested by \cite{swaminathan2015self}, \textsl{Self-Normalization Estimator} is commonly used to control variance for estimators through importance sampling.
Specific to our metric, we can adjust UMG to
$$\hat{z}^{SN}(\pi) = \frac{\sum_{i=1}^{N}\sum_{a=0}^{K}z^T(a,x_i)\mathbb{I}\{\pi(x_i)=a\}}{\sum_{i=1}^{N}\frac{\mathbb{I}\{\pi(x_i)=a_i\}}{p(a_i|x_i)}} - \frac{\sum_{i=1}^{N}z^C(x_i)}{\sum_{i=1}^{N}\frac{\mathbb{I}\{a_i=0\}}{p(0|x_i)}}$$
Here $x_i$ is the $i$-th individual's feature in the dataset, while $a_i$ is the corresponding action taken on him.
The idea of introducing \textsl{Self-Normalization Estimator} is to use the \textsl{standardized weights} modify the difference between the sample average and expected value of importance sampling weights.
Based on the standard theory on ratio estimates, the bias is of order $1/N$ and can be ignored for large $N$, i.e. it is asymptotic unbiased.
And its variance is reduced (See \cite{kong1992note} for a more refined analysis.)
Thus, we call this metric \textsl{Self-Normalized Uplift Modeling General Metric} (SN-UMG).

Both UMG and SN-UMG can be evaluation metric to measure approaches for uplift modeling, and we compare their efficiency by simulation experiments in Section \ref{subsec:metric_compare}.
At the same time, SN-UMG is further used to help estimate the rewards and corresponding Q-value in the training process of our approach (see Alg.1). 
Proper reward design is one of the most important factors for a successful reinforcement learning process, and such an asymptotic unbiased estimation help design exact rewards.

\subsection{Relationship between Uplift Modeling and Offline Contextual Bandit Problem}\label{diff}

Theoretically, the optimal policy $\pi^*$ that maximize the expected uplift response is also the optimal policy that maximize the expected action response, that is
\begin{eqnarray}
\begin{aligned}
\pi^* &= argmax_{\pi} \mathbf{E} _{\Matrix{X}, \pi(\bm{X})}[L(\Matrix{X},\pi(\Matrix{X}))] \\
&= argmax_{\pi} \mathbf{E} _{\Matrix{X}, \pi(\bm{X})}[Y(\Matrix{X},\pi(\Matrix{X})) - B(\Matrix{X})] \\
&= argmax_{\pi} \mathbf{E} _{\Matrix{X}, \pi(\bm{X})}[Y(\Matrix{X},\pi(\Matrix{X}))]
\end{aligned}
\end{eqnarray}
There are also problems with objective of maximizing the expected action response.
For example, the \textsl{offline contextual bandit problem}\cite{beygelzimer2009offset} seeks the optimal policy of taking actions on individuals according to the dataset from a previous experiment.
Similar to uplift modeling, for each individual (feature), only a specific action was taken, knowing the corresponding response.
In terms of the optimal solution, the offline contextual bandit problem could be equivalent to uplift modeling by artificially dealing with data related to no-action as one ordinary action.
However, since it is never possible to obtain the optimal policy in closed form, these two objectives are indeed different when seeking the approximated optimal policies.

Before we show their difference, we first define the performance of an approximated algorithm.
Suppose the approximated algorithm $\mathcal{A}$ take dataset $D$ as input and $\pi_{\mathcal{A},D}$ as output.
The objective of it is to maximize $Obj(\pi_{\mathcal{A},D})$ and the optimal solution is denoted by $\pi^*$.
Then the performance of the algorithm is naturally defined by to what tend it approaches the optimal solution in the sense of objective, i.e.
$$\xi_{\mathcal{A},D}=\frac{Obj(\pi_{\mathcal{A},D})}{Obj(\pi^*_{D})}$$

Now we focus on the class of approximate algorithm $\mathcal{A}$ which takes a dataset of structure similar to uplift modeling, i.e. $D_r = \{(x,a,r(x,a),p(a|x))|x\in \Matrix{X}\}$, as input and $\mathbf{E}_{\Matrix{X},\pi_{\mathcal{A}}}[r(\Matrix{X},\pi_{\mathcal{A}}(\Matrix{X}))]$ as objective.
And suppose that we are able to obtain data for both action responses and uplift responses, $D_{Y}$ with $r(x,a)=Y(x,a)$ for $\forall x,a$ and $D_{L}$ with $r(x,a)=L(x,a)$ for $\forall x,a$ as an ideal case.
Then the algorithm $\mathcal{A}$'s performance should be similar when taking $D_{L}$ and $D_{Y}$ as input and corresponding objective separately, i.e. $\xi_{\mathcal{A},D_Y} \approx \xi_{\mathcal{A},D_L}$, and smaller than one of course.

On the other hand, when we evaluate the output policy of the former one $\pi_{\mathcal{A},D_Y}$ by the objective of uplift response, we have
\begin{equation*}
\begin{aligned}
&\frac{\mathbf{E}_{\Matrix{X},\pi_{\mathcal{A},D_Y}} [L(\Matrix{X},\pi_{\mathcal{A},D_Y}(\Matrix{X}))]}{\mathbf{E}_{\Matrix{X},\pi^*_{D_L}} [L(\Matrix{X},\pi^*(\Matrix{X}))]}\\
\leq&\frac{\mathbf{E}_{\Matrix{X},\pi_{\mathcal{A},D_Y}} [L(\Matrix{X},\pi_{\mathcal{A},D_Y}(\Matrix{X}))]+ \mathbf{E}_{\Matrix{X}} [B(\Matrix{X})]}{\mathbf{E}_{\Matrix{X},\pi^*_{D_L}} [L(\Matrix{X},\pi^*(\Matrix{X}))]+ \mathbf{E}_{\Matrix{X}} [B(\Matrix{X})]}\\
=&\frac{\mathbf{E}_{\Matrix{X},\pi_{\mathcal{A},D_Y}} [Y(\Matrix{X},\pi_{\mathcal{A},D_Y}(\Matrix{X}))]}{\mathbf{E}_{\Matrix{X},\pi^*_{D_Y}} [Y(\Matrix{X},\pi^*(\Matrix{X}))]} = \xi_{\mathcal{A},D_Y} 
\end{aligned}
\end{equation*}
Here the first equality in the last line holds since the optimal solution is the same for these two problem, $\pi^*_{D_L} = \pi^*_{D_Y}$, as we see before.
And whether the inequality in the second line holds depends on the expected nature response $\mathbf{E}_{\Matrix{X}} [B(\Matrix{X})]$.
We conclude the analysis formally in the following theorem.

\begin{theorem}\label{thm:upliftbetter}
If $\mathbf{E}_{\Matrix{X}} [B(\Matrix{X})] = 0$, then algorithm $\mathcal{A}$, taking input $D_{Y}$, has the same performance on the objective of expected uplift response and on the one of expected action response.

If $\mathbf{E}_{\Matrix{X}} [B(\Matrix{X})] > 0$, then algorithm $\mathcal{A}$, taken input $D_{Y}$, will always get worse performance on the objective of expected uplift response, compared with its performance on the objective of expected action response.
\end{theorem}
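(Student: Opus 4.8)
The plan is to reduce the entire statement to a single elementary inequality between the two performance ratios. First I would fix notation by abbreviating the three quantities that recur throughout: let
$$A = \mathbf{E}_{\Matrix{X},\pi_{\mathcal{A},D_Y}}[L(\Matrix{X},\pi_{\mathcal{A},D_Y}(\Matrix{X}))], \quad C = \mathbf{E}_{\Matrix{X},\pi^*}[L(\Matrix{X},\pi^*(\Matrix{X}))], \quad \beta = \mathbf{E}_{\Matrix{X}}[B(\Matrix{X})].$$
Using the additive model $Y = B + L$ together with linearity of expectation, and recalling the previously established identity $\pi^*_{D_L} = \pi^*_{D_Y} =: \pi^*$, the performance of $\mathcal{A}$ on the uplift objective is exactly $A/C$, while its performance on the action objective is exactly $(A+\beta)/(C+\beta)$, since both numerator and denominator pick up the same additive constant $\beta$. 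These are precisely the two endpoints of the inequality chain already displayed before the theorem, so the remaining task is to compare them directly.

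The key step is the algebraic reduction. Assuming the denominators $C$ and $C+\beta$ are strictly positive (which holds whenever the optimal expected uplift is positive and $\beta \geq 0$), cross-multiplying yields
$$\frac{A}{C} \leq \frac{A+\beta}{C+\beta} \iff A(C+\beta) \leq C(A+\beta) \iff \beta(A - C) \leq 0.$$
Next I would invoke optimality: because $\pi^*$ maximizes the expected uplift response over all policies, we have $A \leq C$, hence $A - C \leq 0$. The sign of $\beta(A-C)$ is therefore governed entirely by the sign of $\beta$, which is exactly the dichotomy assumed in the hypothesis.

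With this in hand the two cases are immediate. When $\beta = 0$ the product $\beta(A-C)$ vanishes, the reduced inequality holds with equality, and so $A/C = (A+\beta)/(C+\beta)$; the two performances coincide. When $\beta > 0$ we have $\beta(A-C) \leq 0$, so $A/C \leq (A+\beta)/(C+\beta)$, i.e. the uplift performance never exceeds the action performance, and the inequality is strict precisely when the approximation is genuinely suboptimal, $A < C$. I expect the only real subtlety — and the step I would write out most carefully — to be the bookkeeping on signs: one must verify that both denominators are strictly positive before cross-multiplying, so that the direction of the inequality is preserved, and one should flag that the word \emph{always worse} in the statement tacitly presumes $A < C$, since an algorithm that happened to recover $\pi^*$ would tie at performance $1$ under both objectives.
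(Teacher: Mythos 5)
Your proof is correct and follows essentially the same route as the paper: both rest on the additive decomposition $Y=B+L$, the identity $\pi^*_{D_L}=\pi^*_{D_Y}$, and the comparison of $A/C$ with $(A+\beta)/(C+\beta)$, which is exactly the inequality chain the paper displays before the theorem. Your explicit cross-multiplication, the check that the denominators are positive, and the observation that ``always worse'' is strict only when $A<C$ are small refinements of details the paper leaves implicit, not a different argument.
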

In other words, if the expect nature response is zero, uplift modeling and offline contextual bandit problem are equivalent.
Otherwise, as in most common cases, the expected nature response is positive \footnote{For example, there are always some customers coming to a restaurant even no discount any for food is provided.}, these two problems are inherent different in the sense of approximation solutions.
And since we are considering uplift modeling problem, it is better to seek a solution directly related to the uplift response, rather than solving another problem which seems to be equivalent but turns out to be not.

\section{Reinforcement Learning Method For Uplift Modeling}\label{sec:methods}

\subsection{Overview}
In this section, we first show how to reformulate the uplift modeling problem as an MDP problem by constructing an equivalent Markov chain for the problem. Then we show how to use policy gradient algorithm for solving uplift modeling problem in detail.
The uplift modeling problem is particularly suitable for MDP and RL reformulation, since the exact uplift value is typically not provided for each individual sample, but we can estimate the average value for a batch experiment statistically according to Theorem \ref{thm:UMG}.
Such a situation corresponds to receiving the delayed reward signals after an entire episode of MDP.

In summary, an overview of our approach for uplift modeling is illustrated in Fig. \ref{fig:transition}(a).
It contains two parts, the actions selection and evaluation.
The policy network chooses an action for each sample, then the output will be evaluated by the evaluation function. The policy network updates its parameter according to the result of evaluation iteratively.

\begin{figure*}[ht]
    \center
    \subfigure[Overall framework.]{
        \label{subfig:framework}
        \includegraphics[width=3.6in]{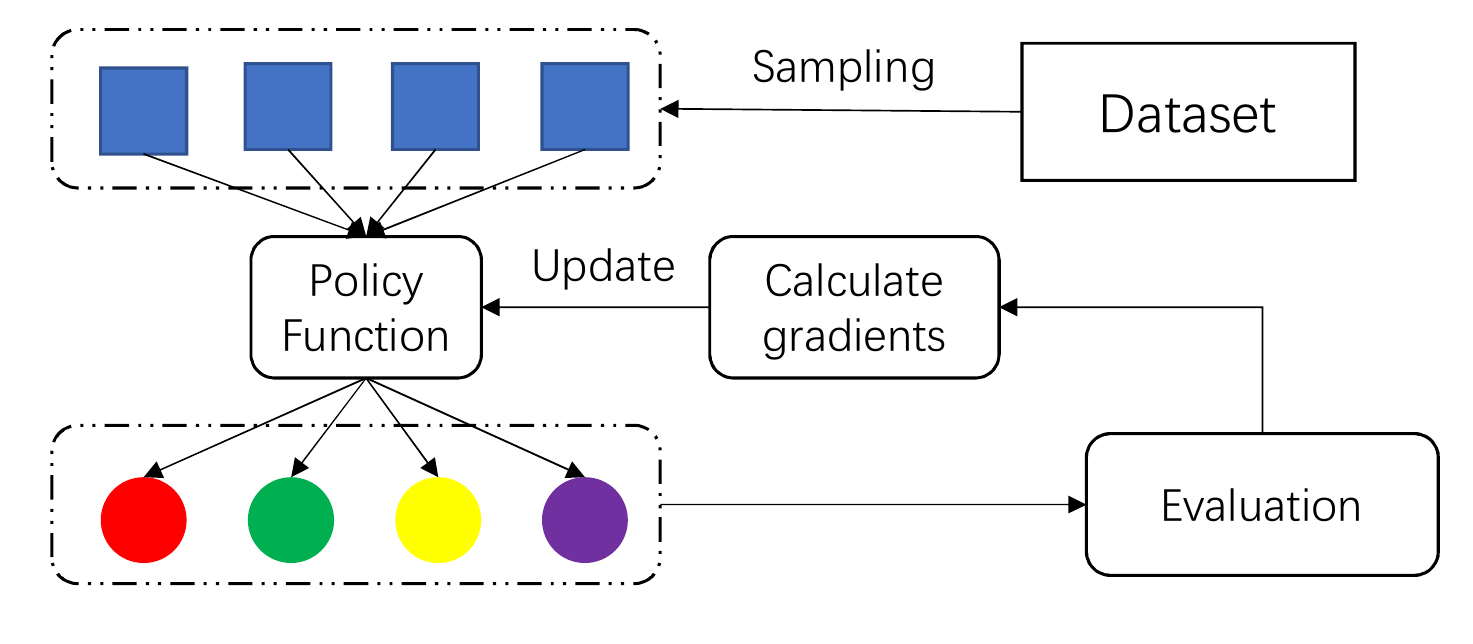}
    }
    \subfigure[The MDP model of uplift modeling.]{
        \label{subfig:transition}
        \includegraphics[height=1.5in]{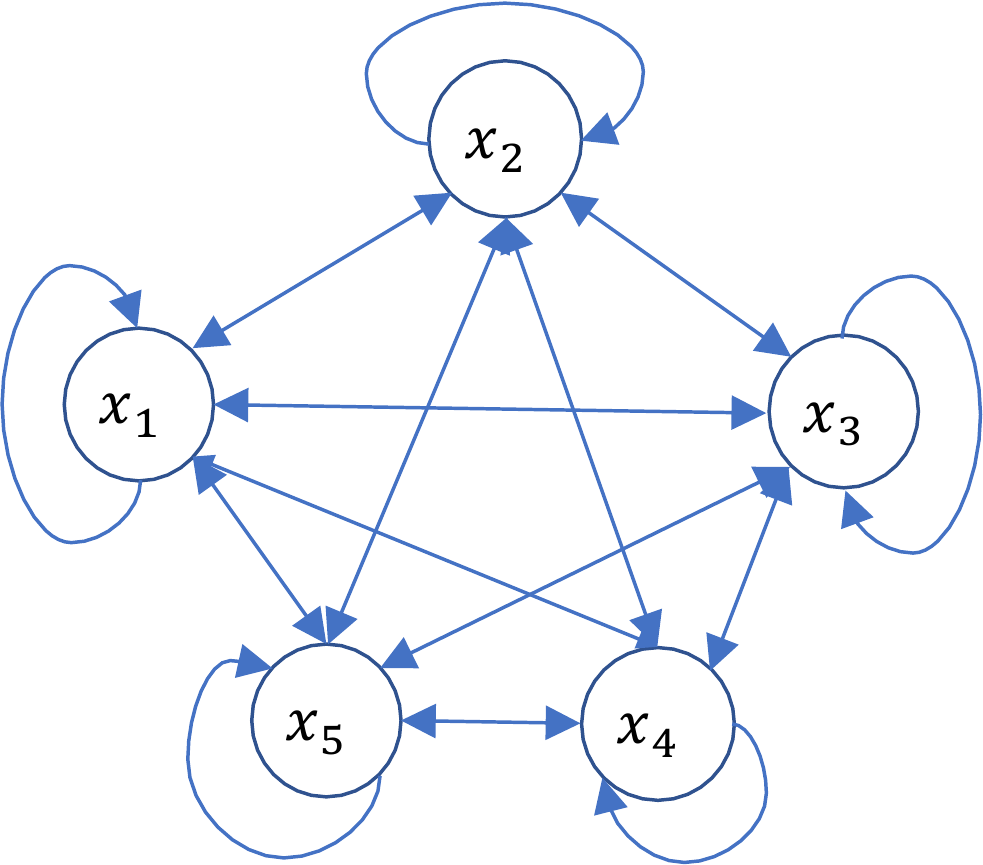}
    }
    \caption{(a) The overall framework. The blue squares represent the samples from dataset randomly. The colorful circles are the actions, coming from the policy. The results are evaluated by the evaluation function. Then the policy are updated according to evaluation results. (b) An example of the MDP model of uplift modeling with only five states.}
    \label{fig:transition}
\end{figure*}

\subsection{MDP Model for Uplift Modeling}
A Markov decision process is 4-tuple, ($S, A, P, R$), where
$S$ is the set of states.
$A$ is the set of actions.
$P_{ss'}^a = Pr \{ s_{t+1} = s' | s_t = s, a_t = a \} $ is the probability that if we choose action $a$ at time $t$ in state $s$, it will lead to $s'$ at time $t+1$, for all $s \in S$.
$R_{s}^{a} = E \{ r_{t+1} | s_t = s, a_t = a \}$ is the immediate reward at the time $t$, transiting from state $s$ to state $s'$, due to action $a$.

We can model the uplift problem as an MDP.
Recall our definition of uplift modeling, Eqn.(\ref{equ:def_uplift}), it aims to maximize the average value of the uplift response.
Thus, at each time $t$, the agent observes a state $s_t$ (i.e, user's feature $x$), and its policy chooses an action $a_t$ (i.e one action or no action), and gets the reward $R_s^a = L(x,a)$ (the uplift response for the user after receiving the action).
The transition probability from $s$ to $s'$ is independent on the state and action. $P_{ss'}^a = 1 / |S|$ are equal for all the pairs ($s$, $s'$), $s, s' \in S$.
Formally, we define $S = X$, $A = \{ 0, 1, ..., K \} $ if we have $K$ actions.
$P_{ss'}^a = 1, \forall s \in S$.
In the uplift modeling, $P_{ss'}^a = Pr \{ s_{t+1} = s' | s_t = s, a_t = a \}$ is always the same, and the stationary distribution of this constructed Markov chain is a uniformly random distribution over all samples in the dataset. Thus sampling from the Markov chain is equivalent to sampling uniformly from the dataset, and independent on the action $a_t$. The Fig. \ref{fig:transition}(b) shows the MDP of uplift modeling.\\
\textbf{State}:
State in MDP is object's features $x$ in uplift modeling.
\textbf{Action}:
MDP has the same action set as original uplift modeling, $A = \{ 0, 1, \dots, K \} $. The policy $\pi(\cdot)$ decides which action to choose to maximize the reward.
We sample the action $a_i \sim \pi(s_i)$. We adopt a softmax function as the policy function.\\
\textbf{Transition}:
Each time $t$ the state $s_t = x_i$ may transit to any state $s_{t+1} = x_j$ with equal probability.

\subsection{Policy Gradient Method}
Reinforcement learning agents learn to maximize their expected future rewards from interaction with an environment.
Each time, we can choose an action $a$ when the current state is $s$ according to a policy $\pi(\theta)$, $a \sim \pi(s, \theta)$, where $\theta$ is parameter of the neural network.
We can evaluate the policy $\pi(\theta)$ according to their long-term expected reward,
\begin{equation}
    J(\theta) =
    \mathbf{E}_{\Matrix{X}, \pi(\Matrix{X})}
    [L(\Matrix{X}, \pi(\Matrix{X}))]
    \label{equ:average_reward}
\end{equation}
Therefore, we can directly optimize the parameter $\theta$ to maximize $J$.
According to the policy gradient theorem \cite{sutton2000policy}, we can calculate the gradient by:
\begin{equation}
    \nabla_{\theta} J(\theta) = \mathbf{E}_{\pi(\theta)}[\nabla_{\theta} \mathrm{log} (\pi(a|s, \theta) Q^{\pi(\theta)}(s, a))]
    \label{equ:policy_gradient},
\end{equation}
where $Q^{\pi}(s, a)$ is state-action value given a policy $\pi$ and is defined as
\begin{equation}
    Q^{\pi}(s, a) = R_s^a + \sum_{s'} P_{ss'}^a V^{\pi}(s')
\end{equation}
where $s'$ is the next state and $V^{\pi}(s')$ is state-value started with $s'$, defined as
\begin{equation}
\begin{aligned}
    V^{\pi}(s) &= \lim\limits_{n \rightarrow \infty} E[\frac{R^{\pi(s_1)}_{s_1} + ... + R^{\pi(s_n)}_{s_n}}{n}] \\ 
    &= \mathbf{E}_t [L(s_t, \pi(s_t))| s_0 = s], \forall s \in S.
\end{aligned}
\end{equation}
We denote $V^{\pi}(s^*) = \sum_{s'} P_{ss'}^a V^{\pi}(s')$ for convenience.
According to Eqn. (\ref{equ:policy_gradient}), the key for calculating the gradient is to know $Q^{\pi}$. Thus we introduce how to estimate $Q^{\pi}$ specific to the uplift modeling as follows:

\paragraph{Q-value Estimation}
$Q^{\pi}(s, a)$ contains two part, $R_s^a$ and $V^{\pi}(s^*)$.
The $V^{\pi}(s^*)$ can be estimated by calculating the SN-UMG.
In each episode, we randomly sample $M$ batches of samples according the MDP, $\mathbf{\Gamma} = \{ \Gamma_1, ..., \Gamma_M \}$, to estimate the value.
$L(s, a)$ is hard to know, so we expect $R_{s}^{a}$ to approximate it.  
According to Theorem \ref{thm:UMG}, we can find: (1) If $p(s)=\pi(s)=a$, then the response $Y(s, a)$ has the positive impact on the result of action $p(s)$. (2)  If $p(s) = 0$ and $\pi(s)=a > 0$, then $Y(s, 0)$ has the negative impact on the result of action $p(s)$, (3) For other cases, $Y(s, a)$ has no impact on the result, so that we do not consider these cases.
Thus, we set $R_{s}^{a}$ as the estimation for the uplift response of this specific sample according to UMG metric
\begin{equation}
    R_{s}^a = 
    \left\{
    \begin{array}{lr}
    Y(s, a)\frac{1}{p(a|s)}, ~~~~ \mathrm{if}~ p(s)=\pi(s)=a \\
    -Y(s, 0)\frac{1}{p(0|s)}, ~ \mathrm{if}~ p(s) = 0  ~\mathrm{and}~ \pi(s)=a > 0
    \end{array}
    \right.
\end{equation}

In addition, in order to accurately estimate the value $V^{\pi}$, the size of each batch need to be large. We need to subtract the baselines to reduce the variance. We also adopt the action-dependent baseline to reduce the variance, which is shown to be effective recently, that is
\begin{equation}\label{equ:qvalue}
Q^{\pi}(s, a) =  \left\{
\begin{array}{lr}
(Y(s, a)- \overline{Y^C}(a)) / p(a|s)  + (V^{\pi}(s^*) - \overline{V^{\pi}}(s^*))  \\
~~~~~~~~~~~~~~~~~~~~~~~~~~~~~~\mathrm{if}~ p(s)=\pi(s)=a &\\
(\overline{Y^C}(a) - Y(s, 0)) / p(0|s)+ (V^{\pi}(s^*) - \overline{V^{\pi}}(s^*)) \\
~~~~~~~~~~~~~~~~~~~~~~~~~~~~~~\mathrm{if}~ p(s) = 0  ~\mathrm{and}~ \pi(s)=a > 0 &
\end{array}
\right.
\end{equation}
Here $\overline{Y^C}(a) = \sum_{m=1}^{M} Y^C_m(a) / M$ and $Y^C(a)$ is the sample average of the conditional expectation for $Z^C(\Matrix{X})$ with respect to action $a$ as we introduced in Lemma 2, which can be calculated in the process of SN-UMG.
$\overline{V^{\pi}}(s^*) = \sum_{m=1}^{M} V^{\pi}_m(s^*) / M $ is the average value of multiple batches in order to estimate the $V^{\pi}(s^*)$ accurately.
Finally, we optimize the $\theta$ for each batch $\Gamma_m$,
\begin{equation}
    \theta \leftarrow \theta + \alpha \sum_{i=1}^{|\Gamma_m|} \nabla_{\theta} \mathrm{log} \pi(s_i, a_i) Q^{\pi}(s_i, a_i), s_i \in \Gamma_m.
\end{equation}

The whole algorithm is shown in Algorithm. \ref{alg:rlift}.
\begin{algorithm}[ht]
                \caption{\small{Policy Gradient Approach for Uplift Modeling}}
                \KwIn{Episode number $numEpoch$. Training data $Data$, batch size $bs$, learning rate $\alpha$}
                \KwOut{The policy network $\theta$}
                \For{$epoch$ $\leftarrow 1$ \KwTo $numEpoch$}{
                    Sample $M$ batches $\mathbf{\Gamma} = \{ \Gamma_1, \dots, \Gamma_M\}$ from $Data$, where each batch contains $bs$ samples. \\
                    \ForEach{$\Gamma_m \in \mathbf{\Gamma}$}{
                        $A_m = \{ a_{m, 1}, \dots, a_{m, bs} \}$,  where $a_{m, i} \sim \pi(s_{m, i}, \theta)$ \\
                        $V_m^{\pi}(s^*), \{ \overline{Y^C(a)} \}_{a=0}^{K} = \mathrm{SN-UMG}(\Gamma_m, A_m)$\\
                    }
                    $\overline{V^{\pi}}(s^*) = \sum_{i=1}^{M} V_m^{\pi}(s^*) / M$ \\
                    \For{$m$ $\leftarrow 1$ \KwTo $M$}{
                        Compute the $Q^{\pi}(s_{m, i}, a), \forall s_{m, i} \in \Gamma_m$, according to Eqn.\ref{equ:qvalue}\\
                        $\theta \leftarrow \theta + \alpha \sum_{i=1}^{bs} \nabla_{\theta} \mathrm{log} \pi(s_{m, i}, a_{m, i}) Q^{\pi}(s_{m, i}, a_{m, i})$
                    }
                }
                \label{alg:rlift}
\end{algorithm}

\section{Experiments}\label{sec:experiemtns}
\subsection{Experiment Setup}\label{experiment:setup}
\subsubsection{Dataset}
We first introduce the open dataset, simulation dataset and real business dataset we used to evaluate our method compared with other baselines.

\paragraph{MineThatData} Kevin Hillstrom’s MineThatData blog \cite{KevinData} is an open dataset containing results of an e-mail campaign for an Internet-based retailer.
It contains information about 64,000 customers with basic marketing information such as the amount of money spent in the previous year or when the last purchase was made.
We use the part of the dataset which containing the visiting response and women's advertisement, because the men' advertisements are ineffective and purchasing signals are sparse.
We use it for the single action and binary response experiment.

\paragraph{Synthetic Dataset}\label{synthetic}
For the multiple actions and general response type experiments, there is no open dataset large enough, thus we generate a simulation dataset.
The generation algorithm is a modified version in \cite{zhao2017uplift}.
\cite{zhao2017uplift} proposed a method based on the decision tree, so the uplift value of different actions in their dataset depends on only one attribute, while our method has no such requirement.

The features space is a 50-dimensional hyper-cube of length 10, or $\mathbb{X}^{50}$ = $\mathrm{[} 0, 10 \mathrm{]}^{50}$.
Features are uniformly distributed in the feature space, i.e., $X_d \leftarrow \mathrm{U} \mathrm{[} 0, 10 \mathrm{]}$, for $d = 1, \dots, 50$.
There are four     different actions, $A = \{ 1, 2, 3, 4 \}$.
The response under each action is defined as below.
\begin{equation}
    \begin{split}
        Y(X) &= 5f_1(X) + f_2(X) + \epsilon \\
        f(x_1, \dots, x_{50}) &= \sum\limits_{i=1}^{50} a^i \cdot \mathrm{exp} \{-\sum\limits_{j=1}^{50}b_j^i|x_j - c_j^i|\}.
    \end{split}
\end{equation}
The action response $Y(X)$ consists of the nature response ($f_1$), the uplift response ($f_2$) and white noise ($\epsilon$).
Both $f_1(\cdot)$ and $f_2(\cdot)$ are in the form of $f(\cdot)$, but with different parameters ($a, b, c$).
In our experiment, we set $a^i \sim \mathrm{U}\mathrm{[}0, 10\mathrm{]}$, $b_j^i \sim \mathrm{U}\mathrm{[}0, 0.1\mathrm{]}$ and $c_j^i \sim \mathrm{U}\mathrm{[}0, 5\mathrm{]}$ for $\forall i,j$.
And a group of $a^i$, $b_j^i$ and $c_j^i$ for $\forall i,j=1,2,\dots,50$ is randomly chosen for the nature response $f_1(\cdot)$.
Then for each action, a new group of $a^i$, $b_j^i$ and $c_j^i$ for $\forall i,j=1,2,\dots,50$ is randomly chosen for the corresponding uplift response $f_2(\cdot)$.
Finally $\epsilon$ is set to be the zero-mean Gaussian noise, i.e. $\epsilon \sim N(0, \sigma^2)$.
We set $\sigma = 0.8$.
We generate 500,000 samples for each action and a control group ($f_2(\cdot)=0$) with 500,000 samples.
Thus, there are 2500,000 samples in total.

\paragraph{Real Business Dataset}
We also evaluate our methods on the real business dataset from a company.
The dataset is selected from its marketing records for its new service in September 2017, when coupons of different types were sent to customers to attract them to use the service and further become long-term memberships.
The type of each coupon was randomly chosen with equal probabilities for all levels, independent of customers' features.
We took 620,000 samples of these customers' features (264 related attributes, such as one's resident, age, gender and so on), types of their received coupons (actions), and their response (whether they used the service, denoted by $r_1$, and whether they pay for long-term memberships, denoted by $r_2$). The ratio of the positive and negative sample is close 1 : 200.

\subsubsection{Baselines}
We compare our methods with several baselines on both single action and multiple actions.
\begin{itemize}
    \item \textbf{Separate Models Approach (SMA)} \cite{zhao2017uplift} Using a separate model for each group of people receiving the same action and predicting the response given each specific action and features. Choose the actions with the largest response. It can be applied to multiple actions and general responses. We consider both Random Forest and neural network as the separate models, denoted by SMA-RF and SMA-NN.
    \item \textbf{Random Forests (Uplift RF)} \cite{guelman2015uplift} Using a specific function as the splitting criteria for random forests. We use the package implemented in R, which can only be applied on single action and binary response.
    \item \textbf{Offset Tree (OT)} \cite{beygelzimer2009offset} Reducing the problem into binary classification and reusing fully supervised binary classification algorithms (also known as base algorithm). We use the Python package {\href{https://github.com/david-cortes/contextualbandits}{\textsl{contextualbandits}}} which requires more than two actions. And we consider both Random Forest and Logistic Regression as the base algorithms, denoted by OT-RF and OT-LR.
\end{itemize}

\subsubsection{Parameter Selection}
We evaluate our model by using three-fold cross validation.
The neural network of the policy gradient is three-layers fully connected network with the hidden layer being a size of 512 for the open dataset and synthetic dataset, and a size of 1024 for the real-world business data.
Each episode, we take 10 random batches and each batch contains 10000 examples.
The splitting ratio of training, validating and testing is 0.6, 0.2, 0.2.
We stop the training when the model could not achieve a better result on the validate dataset within 1000 episodes. The learning rate is 0.1.

\subsection{Efficiency of UMG and SN-UMG}
In this subsection, we verify our proposed metric UMG and SN-UMG on the  synthetic datasets and show the convergence of our two proposed metrics.

The dataset for experiments is generated by the method we introduce in the Section \ref{synthetic}. The action set is also $\{0, 1, 2, 3, 4 \}$.
We adopt two kinds of policies for action offering in two experiments:
\begin{itemize}
    \item Uniform: Each action is offered with the equal probability. That is, $Pr(X, a) = 0.2, \forall X, a$.
    \item Policy: We choose first five attributes of each X, $x_1, \dots, x_5$. The probability of choosing the action $i$ is proportional to $x_i$. That is,  $Pr(X, a) = \frac{x_a}{\sum_{i=1}^{5}x_i}$
\end{itemize}
We test the errors between the real uplift response and the estimations of uplift response from UMG and SN-UMG with different sizes of data, in order to test its convergence efficiency.
When the size of data is fixed, we run 10 times of experiments to estimate the mean and variance of two metrics, which are shown in Fig. \ref{fig:random:umg} and Fig. \ref{fig:policy:umg}.
The Fig. \ref{fig:random:umg} shows the convergence curve of UMG and SN-UMG on the uniform dataset.
The Fig. \ref{fig:policy:umg} The convergence curve of UMG and SN-UMG on the policy dataset.
We can find that SN-UMG performs better in both experiments than UMG, on the accuracy and stability.
When the size of data is over 10000, both metrics almost converge.
Both metrics perform better on the uniform data than on the policy data, while the variances of SN-UMG are relatively lower than the ones of UMG.
\begin{figure}[ht]
    \centering
    \includegraphics[width=0.95\linewidth]{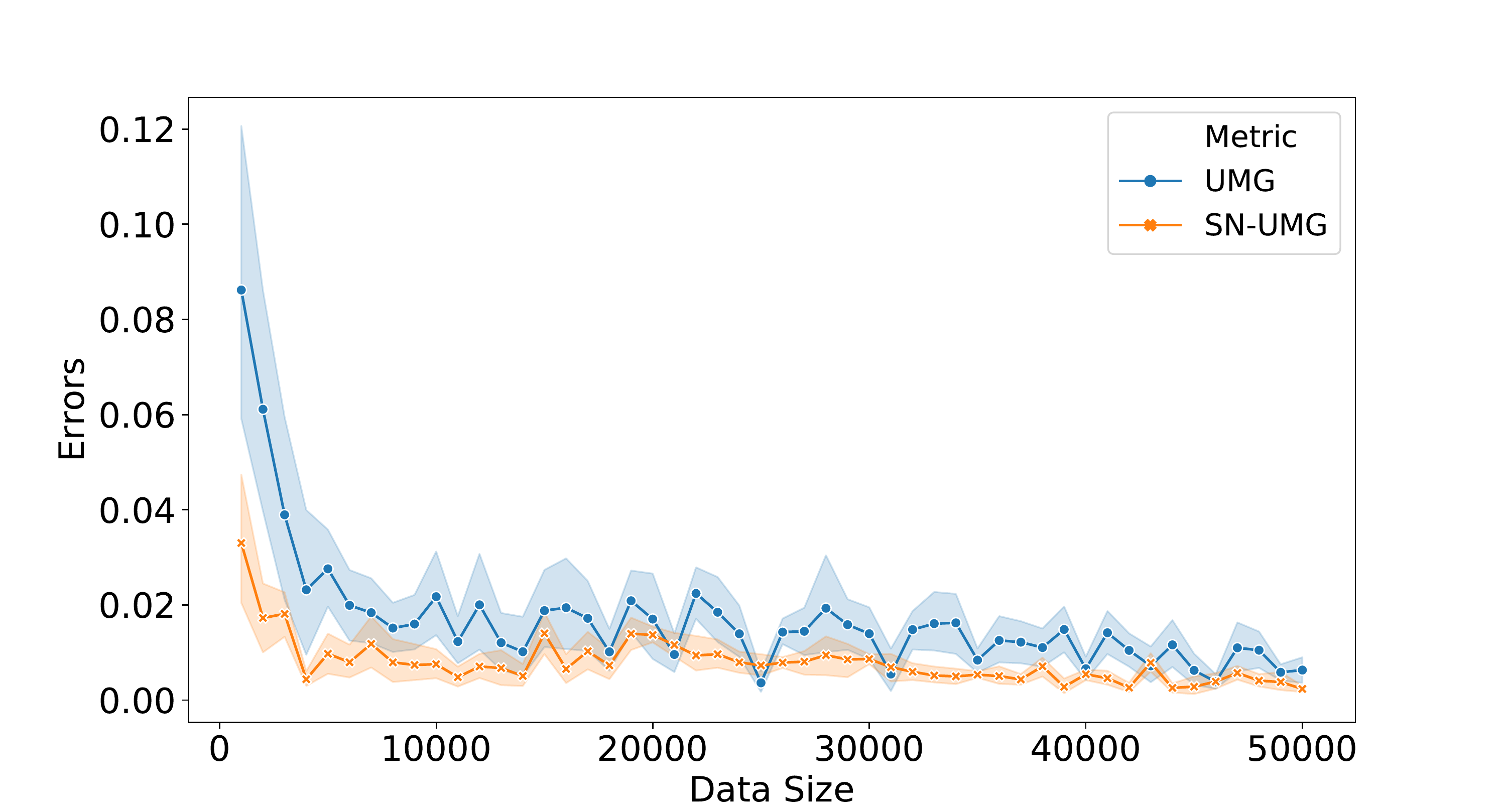}
    \caption{The convergence curve of UMG and SN-UMG on the uniform dataset.}
    \label{fig:random:umg}
\end{figure}

\begin{figure}[ht]
    \centering
    \includegraphics[width=0.95\linewidth]{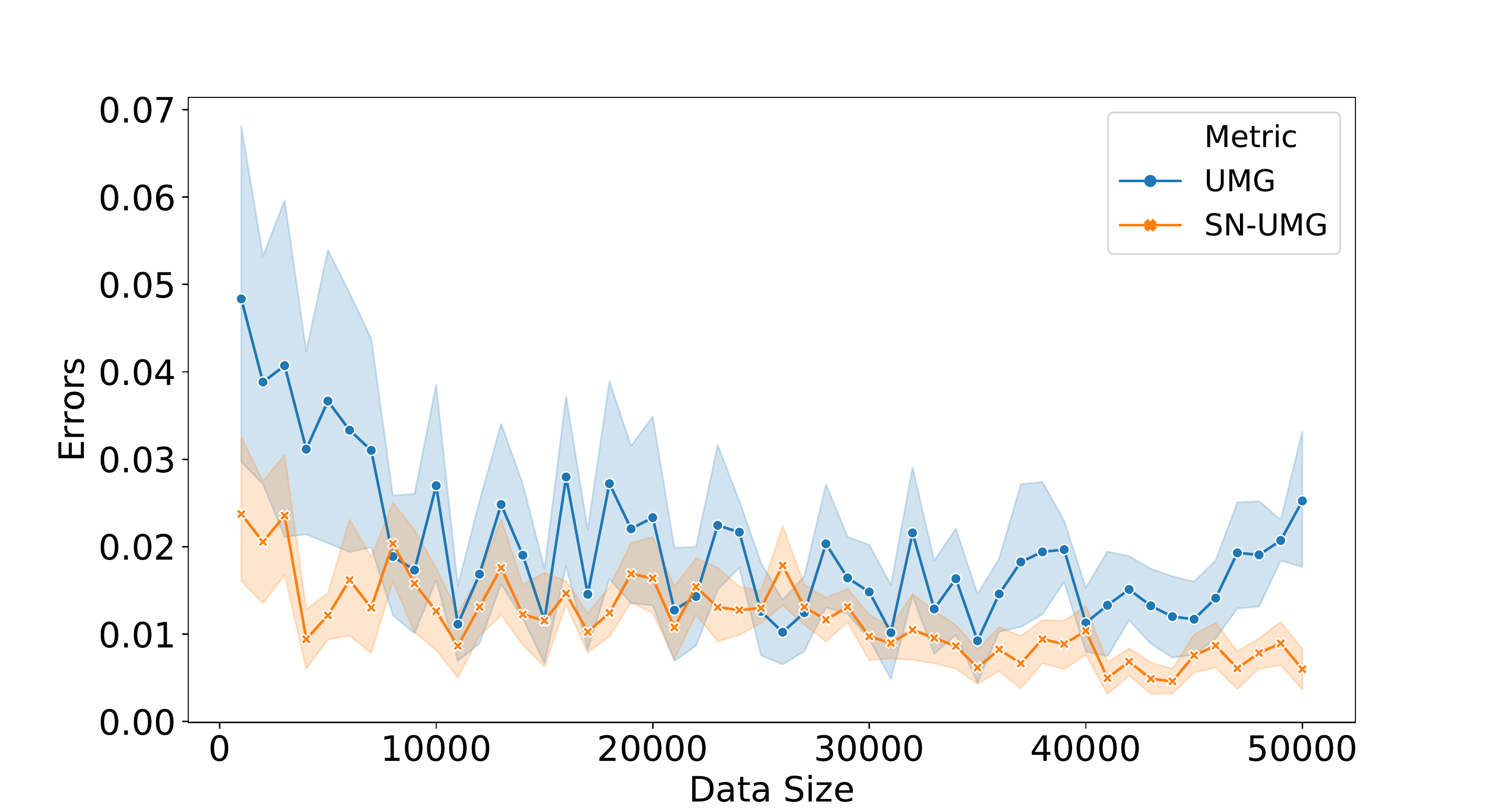}
    \caption{The convergence curve of UMG and SN-UMG on the policy dataset.}
    \label{fig:policy:umg}
\end{figure}

\subsection{Single Action and Binary Response}\label{subsec:metric_compare}
We compare our method with SMA-RF, SMA-NN and Uplift RF. The dataset is MineThatData dataset.
Methods are evaluated not only on our proposed UMG metric, but also on the Qini curve and Qini coefficient\cite{radcliffe2007using}.
Basically, the larger the area of one policy's Qini curve is, the better its performance is.
And Qini coefficient is calculated by this area first subtracted by the area of a random policy, then divided by a constant only related to the dataset.

%
%
And evaluating one policy by Qini curve requires the probability that it chooses to take the single action $a^*$, instead of its binary output (take $a^*$ or not), for each individual, thus we adjust our method's output when evaluated by Qini.
In precise, we adopt multiple discrete actions ($\{0,\dots,n-1\}$) to represent different probabilities of offering the single action in this experiment.
For example, when our policy chooses action $a$ ($0\leq a \leq n-1$) for one sample, it means the probability of taking action $a^*$ for this sample is between $\frac{a}{n}$ and $\frac{a+1}{n}$.
In our experiment, $n=5$.

The results are shown in Table \ref{tab_singlemulti} and we also plot the Qini curves for methods with large Qini coefficients in Fig. \ref{fig_qini}.
Uplift RF performs better than SMA-RF significantly indicates that the method designed for uplift modeling is more effective on equal conditions.
SMA-NN gains the best results except for RLift manifests that it's necessary to import more powerful model, like neural network, to the uplift modeling.
Our method can achieve the state-of-the-art performance on both the SN-UMG metric and Qini coefficient, because we model the uplift signal and adopt deep learning method simultaneously.

\begin{figure}[ht]
    \centering
    \includegraphics[width=\linewidth]{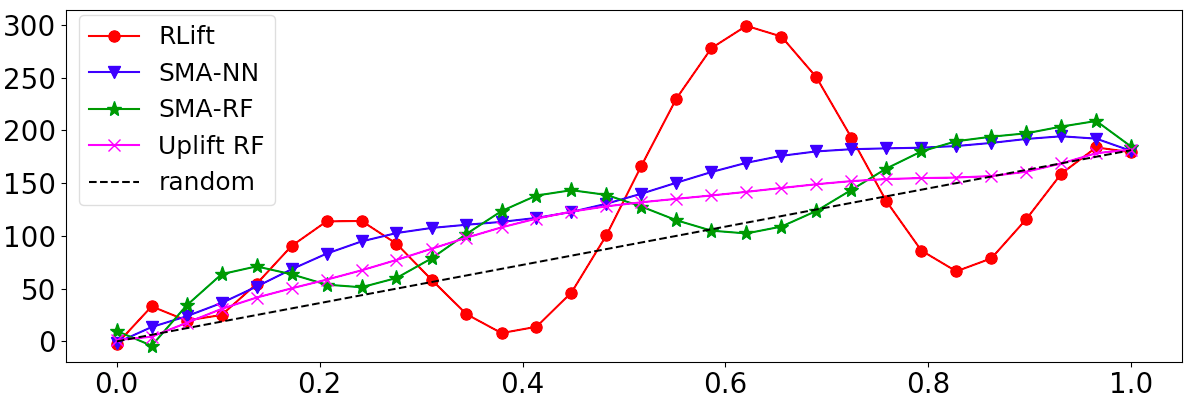}
    \caption{Qini Curve}
    \label{fig_qini}
\end{figure}

\begin{table}[ht]
    \centering
    \begin{tabular}{|c|c|c|c|c|c|c|}
    \hline
    Methods & SMA-RF & SMA-NN & \multicolumn{2}{|c|}{Uplift RF} & \multicolumn{2}{|c|}{RLift}\\ \hline
    SN-UMG  & 0.0223 & 0.0364 & \multicolumn{2}{|c|}{0.0313} & \multicolumn{2}{|c|}{\textbf{0.0464}} \\ \hline
    Qini & 0.0246 & 0.0650 & \multicolumn{2}{|c|}{0.0412} & \multicolumn{2}{|c|}{\textbf{0.0656}} \\ \hline
    \end{tabular}
    \caption{\small{Comparison of \textit{SN-UMG} and \textit{Qini} on the MineThatData}}
    \label{tab_singlemulti}
\end{table}

It is worth noticing that fluctuations occur in Qini curves, for RLift, SMA-RF and SMA-NN.
This is because these methods are optimized based on the whole dataset of samples, while the Qini curve shows some intermediate results for individual samples.
Since the final objective is measured by the area between its Qini curve and the horizontal axis, we conjecture there may be a trade-off between the final results and intermediate result.
On the other hand, Qini curve can be only applied on the case of binary response and binary action while our method is not limited to it.
Thus it is not suitable for further experiments and we just show the relationship between our method and previous works in this experiment.

\subsection{Multiple actions and General Response}
For uplift modeling with multiple actions, only Uplift KNN, SMA-RF, and SMA-NN can be applied as baselines.
For the dataset, there are not large enough open dataset, so we generate the synthetic dataset, among which the optimal results are known.
The evaluation metric is SN-UMG.

The results are shown in Table \ref{tab_singlemulti}.
Our method performs much better than the baselines. When the relation between responses and features are extremely complicated, the advantage of RLift is more obvious, compared with other baselines.
Also, when the size of the dataset is large enough, the performance of RLift is very close to the optimal result.

\begin{table}[ht]
    \centering
    \begin{tabular}{|c|c|c|c|}
    \hline
    Methods & SMA-RF & SMA-NN & OT-RF\\ \hline
    SN-UMG & 0.1010 & 0.1158 & 0.1075 \\ \hline
    Methods & OT-LR & Optimal & RLift \\ \hline
    SN-UMG & 0.1270 & 0.1467 & \textbf{0.1397} \\ \hline
    \end{tabular}
    \caption{Comparison of \textit{SN-UMG} on the Synthetic Data}
    \label{tab_multi}
\end{table}

\subsection{Real Business Experiments}
We also test our method on the real-world business dataset, evaluated through SN-UMG.
It contains records of multiple actions with binary response, thus we use SMA-RF and SMA-NN as baselines.
Uplift KNN is not considered due to its low efficiency when deploying in the environment of big data.

And in the dataset, there are two kinds of binary responses ($r_1,r_2 \in\{0,1\}$), related to two objectives in practice.
Thus we conduct two kinds of experiments in this part.
Firstly, we consider the uplift value of these two binary responses as the objectives separately, thus it is two experiments of multiple actions and binary response.
Table. \ref{tab:real_one} shows the results of the experiment on each objective. Since the positive sample is extremely sparse in the dataset, mentioned above, the performances of SMA-RF and SMA-NN are very bad. Random performs relatively well.
By contrast, RLift performs robustly in this task, because it uses the statistic results as rewards to guide training, which can reduce the negative effects of the sparsity of the dataset.
\begin{table}[ht]
\centering
\begin{tabular}{|c|c|c|c|c|c|}
\hline
Methods & SMA-RF & SMA-NN & OT-RF & OT-LR & RLift\\ \hline
$r_1$ & 0.00287 & 0.00329 &  0.02753 & 0.02370 & \textbf{0.03024} \\ \hline
$r_2$ & 0.000252 & 0.000793 & 0.00773 & 0.00504 & \textbf{0.00842} \\ \hline
\end{tabular}
\caption{Comparison of \textit{SN-UMG} on the Real Business Dataset}
\label{tab:real_one}
\end{table}

Secondly, we consider the weighted combination of two responses as a single objective, thus it is the experiment of multiple actions and general response.
Such an objective is common in a practical business where multiple objectives are concerned by companies simultaneously.
Table. \ref{tab:real_multi} shows the results of the experiments on the objectives with different weights.
The weight of $r_2$ is always larger than the one of $r_1$ conforms to the actual demand.
In these tasks, RLift reveals its flexible ability on multi-objective tasks and shows its adaptation to complicated tasks.
All the results show that our model can achieve the superiority compared with Random, SMA-RF and SMA-NN.
\begin{table}[ht]
\centering
\begin{tabular}{|c|c|c|c|c|c|}
\hline
Methods & SMA-RF & SMA-NN & OT-RF & OT-LR & RLift\\ \hline
(0.1, 0.9) & 0.00077 & 0.00127 & 0.00953 & 0.00563 & \textbf{0.01120} \\ \hline
(0.2, 0.8) & 0.00118 & 0.00116 & 0.01078 & 0.00722 & \textbf{0.01325} \\ \hline
(0.3, 0.7) & 0.00114 & 0.00193 & 0.01233 & 0.00905 & \textbf{0.01548} \\ \hline
(0.4, 0.6) & 0.00146 & 0.00189 & 0.01394 & 0.01091 & \textbf{0.01871} \\ \hline
\end{tabular}
\caption{Comparison of \textit{SN-UMG} on the Real Business Dataset on the weighted indexes. $(w_1,w_2)$ represent an objective of $w_1*r_1 + w_2*r_2$.}
\label{tab:real_multi}
\end{table}

\section{Conclusion}
In this paper, we propose a new evaluation metric of multiple actions and general response types for the uplift modeling and prove its unbiasedness.
And we solve the uplift modeling problem through a deep reinforcement learning method.
During the training process, the variance for estimating rewards and Q-value is further reduced by taking advantages of unbiased metric as well as action-dependent baselines.
Compared with current methods on open, synthetic and real datasets, our method achieves the state-of-the-art performance under new proposed metric as well as traditional metric.

\bibliographystyle{plain}  
\bibliography{reference}
\end{document}